\title{\LARGE \bf Geometry-based Graph Pruning for Lifelong SLAM\vspace{-5mm}}
\author{Gerhard Kurz, Matthias Holoch, Peter Biber
\thanks{The authors are with Robert Bosch GmbH, Corporate Research, Germany. E-mail: \{gerhard2.kurz,  matthias.holoch, peter.biber\}@de.bosch.com \newline We thank Kai O. Arras and Artur Koch for their helpful input.}
\vspace{-2cm}
}
\newtheorem{definition}{Definition}
\newtheorem{theorem}{Theorem}
\DeclareMathOperator*{\argmax}{arg\,max}
\newcommand\copyrighttext{%
	\footnotesize \textcopyright 2021 IEEE. Personal use of this material is permitted.
	Permission from IEEE must be obtained for all other uses, in any current or future
	media, including reprinting/republishing this material for advertising or promotional
	purposes, creating new collective works, for resale or redistribution to servers or
	lists, or reuse of any copyrighted component of this work in other works.
	}
\newcommand\copyrightnotice{%
	\begin{tikzpicture}[remember picture,overlay]
	\node[anchor=south,yshift=10pt] at (current page.south) {\fbox{\parbox{\dimexpr\textwidth-\fboxsep-\fboxrule\relax}{\copyrighttext}}};
	\end{tikzpicture}%
}
\begin{document}
\maketitle
\copyrightnotice
\thispagestyle{empty}
\pagestyle{empty}

\newcommand{\edge}[2]{(#1,#2)}

\begin{abstract}
	Lifelong SLAM considers long-term operation of a robot where already mapped locations are revisited many times in changing environments.
	As a result, traditional graph-based SLAM approaches eventually become extremely slow due to the continuous growth of the graph and the loss of sparsity.
	Both problems can be addressed by a graph pruning algorithm.
	It carefully removes vertices and edges to keep the graph size reasonable while preserving the information needed to provide good SLAM results.
	We propose a novel method that considers geometric criteria for choosing the vertices to be pruned.
	It is efficient, easy to implement, and leads to a graph with evenly spread vertices that remain part of the robot trajectory.
	Furthermore, we present a novel approach of marginalization that is more robust to wrong loop closures than existing methods.
	The proposed algorithm is evaluated on two publicly available real-world long-term datasets and compared to the unpruned case as well as ground truth.
	We show that even on a long dataset (25h), our approach manages to keep the graph sparse and the speed high while still providing good accuracy (40 times speed up, 6cm map error compared to unpruned case).
\end{abstract}

\section{Introduction}
Graph-based approaches are widely considered the most popular solution to the simultaneous localization and mapping (SLAM) problem~\cite{Grisetti2010a} due to their reliability, flexibility, and favorable computational cost~\cite[Sec. II]{Cadena2016}.
In \emph{lifelong} SLAM~\cite{Krajnik2016}, we assume that an area is not just mapped once but revisited continuously over a long period of time.
This leads to a number of additional challenges compared to a standard SLAM problem where an area is only covered once or twice during a short period of time~\cite[Sec. III \& IV]{Cadena2016}.
This paper focuses on one of these challenges: scalability. The graph grows indefinitely over time, which makes the SLAM algorithm prohibitively slow and implies virtually unlimited memory requirements.

To illustrate the problem, imagine an environment of a limited size.
A robot keeps moving around in this environment for a long time while running a naive graph-based SLAM algorithm that continuously adds new measurements to its map.
The number of vertices in the graph as well as the memory requirements will typically increase roughly linearly w.r.t. the time spent driving.
As parts of the environment are traversed again and again, more and more loop closure edges will get added between nearby vertices.
In the worst case, this leads to a quadratic number of edges w.r.t. time and to a graph whose structure is not sparse anymore.
This impacts the runtime severely, because commonly used optimizers such as g2o \cite{Kuemmerle2011} or gtsam \cite{Dellaert2012} are only efficient if the graph is sparse and become extremely slow for dense graphs.

The goal of graph pruning is to carefully remove redundant vertices and edges, while minimizing the loss in localization and mapping accuracy.
The algorithm ensures that the size of the graph only scales with the size of the environment and becomes independent of the time spent there or the length of the robot's trajectory.
An example of the effects of pruning is shown in Fig.~\ref{fig:flatexample}.

\begin{figure}
	\centering
	\includegraphics[height=50mm, trim=0 0 0 7mm, clip]{./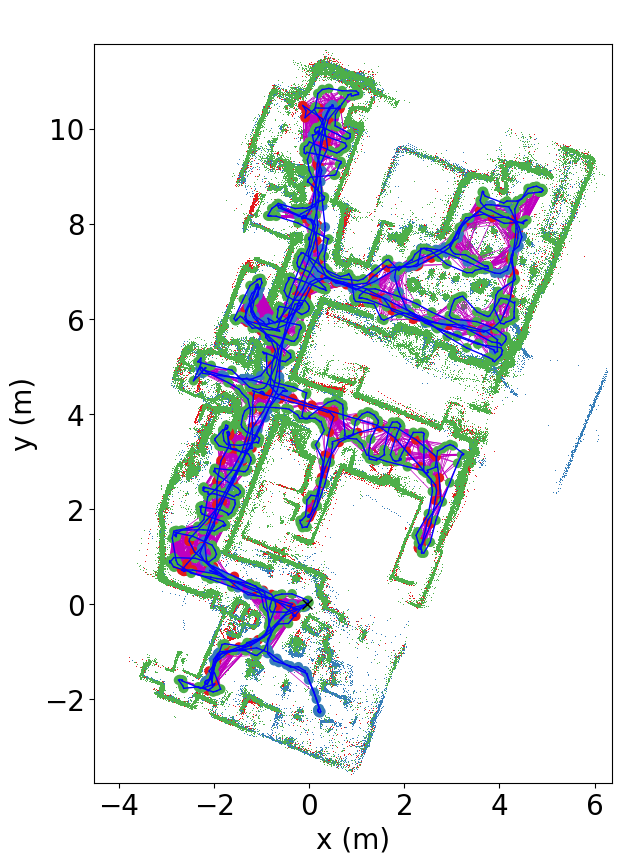}
	\quad
	\includegraphics[height=50mm, trim=0 0 0 7mm, clip]{./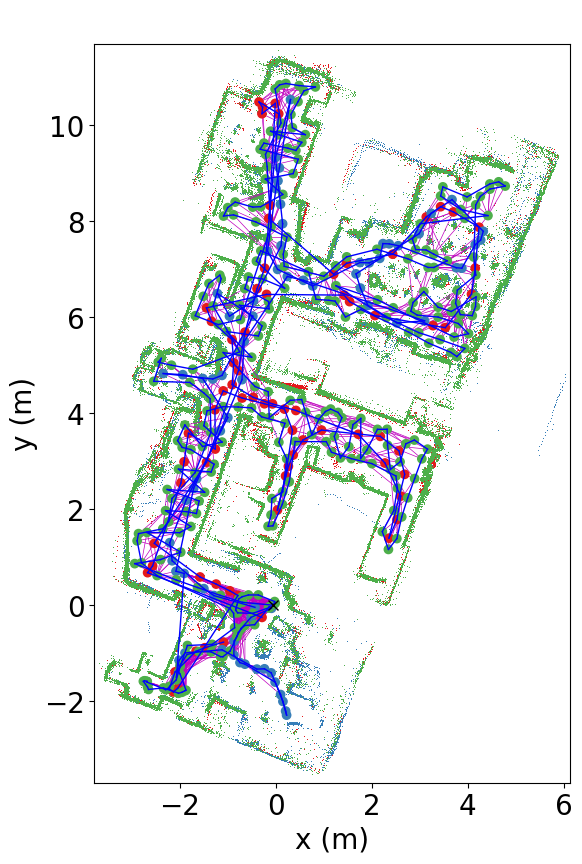}
	\caption{SLAM map of an apartment without (left) and with pruning (right) using the proposed approach.}
	\label{fig:flatexample}
\end{figure}

In this paper, we present a novel approach that performs vertex pruning based on geometric criteria. 
Specifically, we make the following contributions:
\begin{enumerate}
	\item An efficient approach for vertex pruning based on a \emph{scale-invariant density} that does not depend on the data associated to a given vertex and leads to an even distribution of vertices in the pruned graph.
	\item A marginalization algorithm that is robust to incorrect loop closures.
	\item A thorough evaluation on two real-world datasets using different pruning parameters.
	\item An evaluation scheme that does not require ground truth and is independent of the particular SLAM algorithm by comparing the graph with and without pruning.
\end{enumerate}

\section{Related work}

Several other authors have considered the problem of graph pruning before.
Kretschmar et al.~\cite{Kretzschmar2010} proposed a pruning scheme where the information gain of each lidar scan is computed using a probabilistic occupancy grid.
Then, the vertices whose lidar scans provide the least additional information are removed.
Marginalization of vertices is performed by replacing all edges of the pruned vertex with edges connected to one of its neighbors.
An improved version of the paper uses a Chow--Liu tree~\cite{Chow1968} to approximate the clique obtained by marginalization~\cite{Kretzschmar2011}. 

Eade et al. proposed a similar graph pruning approach~\cite[Sec.~VIII]{Eade2010} in the context of visual SLAM.
Nodes in the graph are divided into view and pose nodes, the latter of which can be pruned after a while because they only encode the robot's trajectory and constraints in the graph but are not relevant for the map itself.
Marginalization is performed by inserting binary edges between all vertices adjacent to a pruned vertex, which leads to the creation of numerous new edges.
Hence, the authors proposed an edge pruning algorithm that uses a simple heuristic to remove superfluous edges.

Lazaro et al. suggested a somewhat different approach where multiple scans are aggregated into a local map, which corresponds to a single vertex in the graph~\cite{Lazaro2018}.
Later, several local maps can be merged into a single vertex by building on the idea of condensed measurements previously proposed by Grisetti et al.~\cite{Grisetti2012}.
Their approach effectively combines pruning and change detection into a single operation.
A disadvantage is that newly created vertices are not located on the robot's trajectory anymore.

Carlevaris-Bianco et al. have further investigated the issue of marginalizing vertices from the graph~\cite{CarlevarisBianco2013,CarlevarisBianco2014}.
They propose an exact marginalization using n-ary factors and a sparse approximation, which is based on a Chow--Liu tree.
This approach is different from~\cite{Kretzschmar2011}, where the tree-based approximation is computed from binary factors.
The benefit of considering n-ary factors is that correlations between binary factors are not ignored and that the approach can also be applied to factors that do not constrain the full state (e.g., bearing only measurements, range only measurements).
Later works by Mazuran et al.~\cite{Mazuran2015} and Ta et al.~\cite{Ta2018} generalize and improve this approach further.

A pruning approach for feature-based SLAM was proposed by~\cite{Wang2015}.
They use the Kullback--Leibler divergence between the graph before and after pruning to decide which vertex to prune.
For marginalization, they consider different ways to approximate the clique using fewer edges.
Unfortunately, their work is not directly applicable to non-feature based SLAM, e.g., lidar SLAM based on scan matching.

\section{Vertex Pruning}

\begin{figure*}
	\centering
	\newcommand{\figwidthgrid}{39mm}
	\newcommand{\figheightgrid}{38mm}
	\newcommand{\figwidth}{45mm}
	\begin{subfigure}{.3\linewidth}
		\includegraphics[width=\figwidthgrid]{./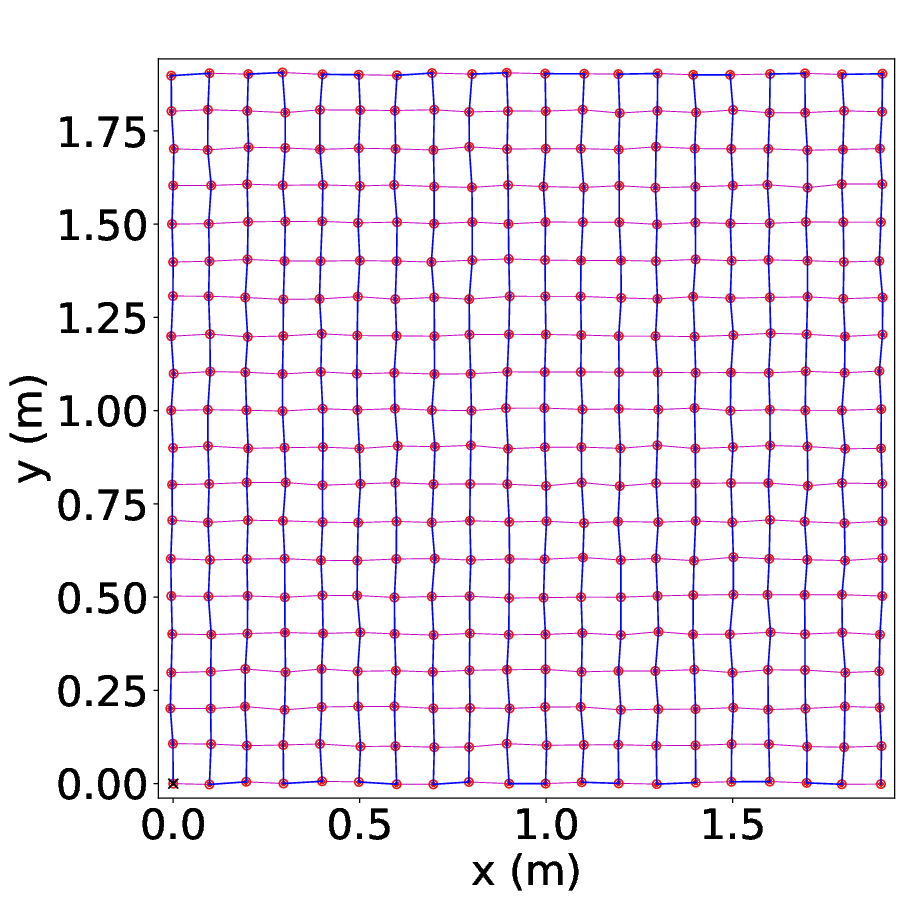} \\
		\includegraphics[trim=0 0 41mm 0, clip, height=\figheightgrid]{./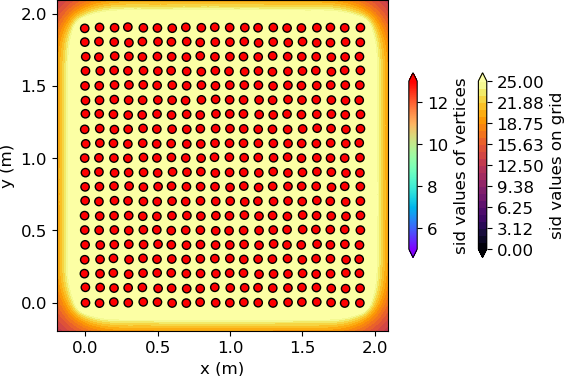}
		\captionsetup{margin={-5mm, 0cm}} 
		\caption{grid, original}
	\end{subfigure}
	\begin{subfigure}{.3\linewidth}
		\includegraphics[width=\figwidthgrid]{./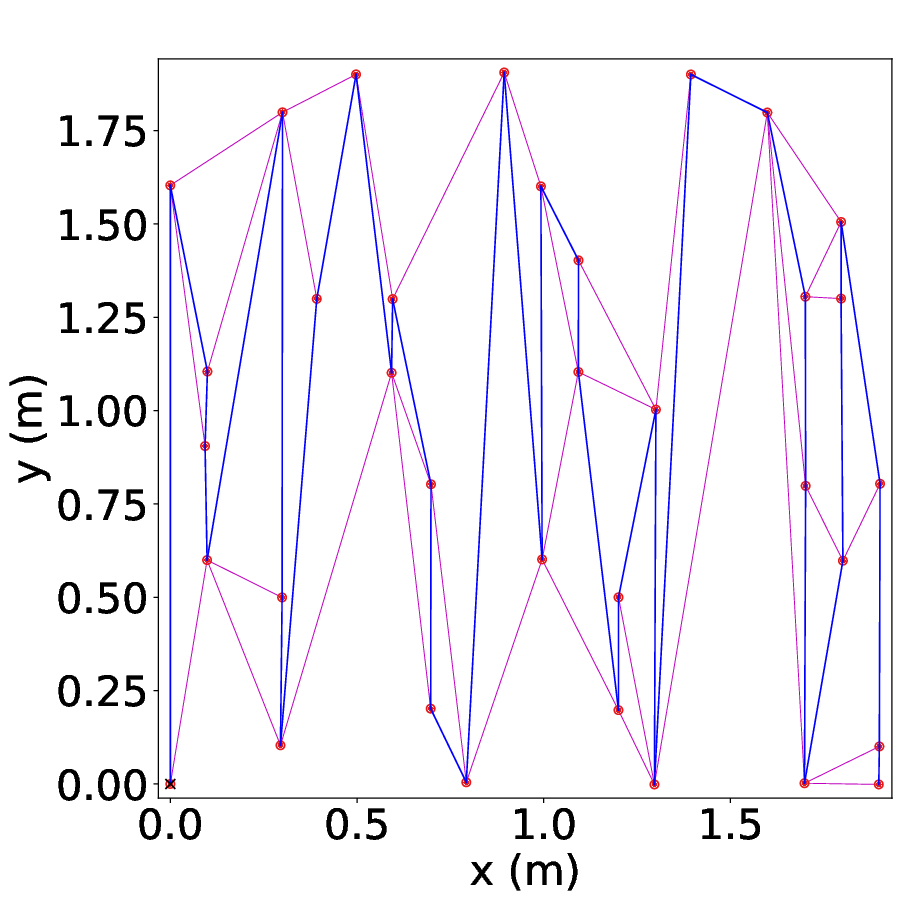}\\
		\includegraphics[trim=0 0 41mm 0, clip, height=\figheightgrid]{./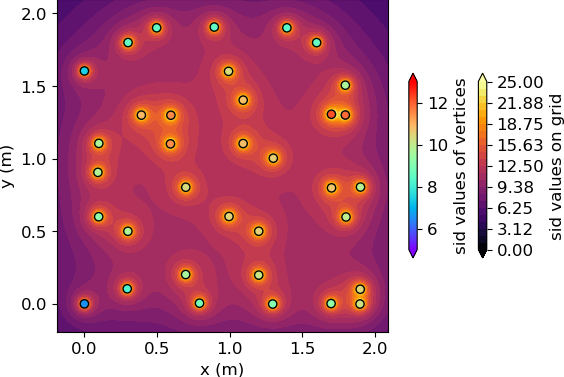}
		\captionsetup{margin={-8mm, 0cm}} 
		\caption{grid, density-based}
	\end{subfigure}
	\begin{subfigure}{.36\linewidth}
		\includegraphics[width=\figwidthgrid]{./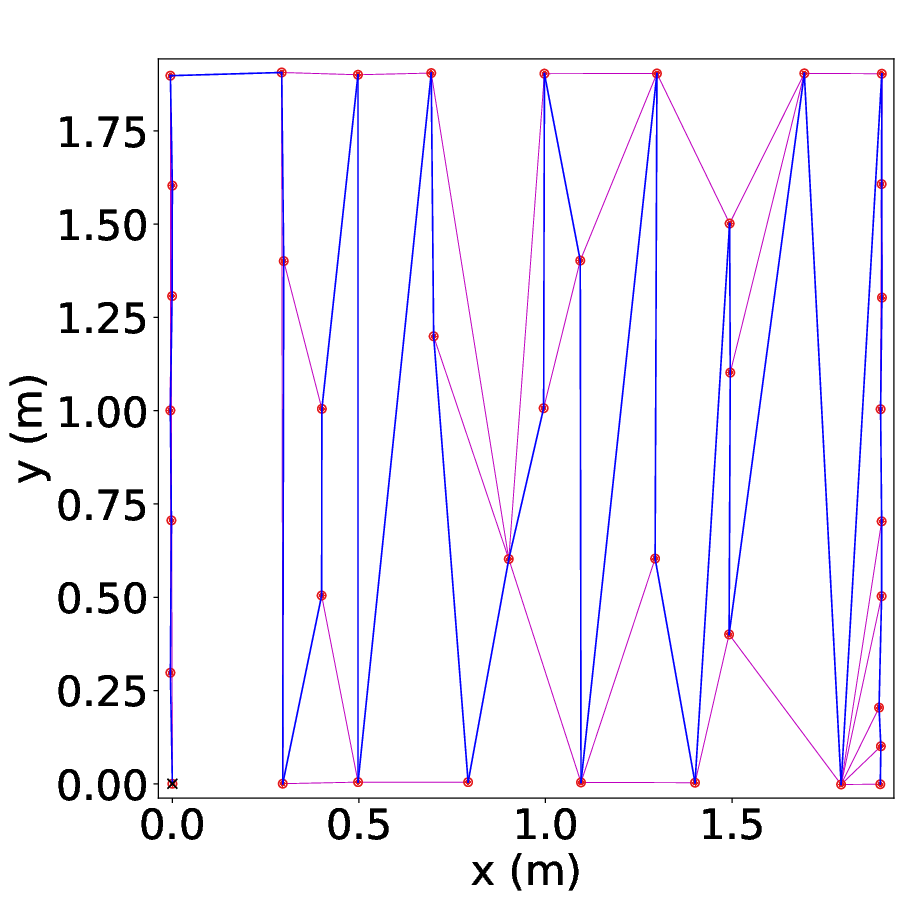} \\
		\includegraphics[height=\figheightgrid]{./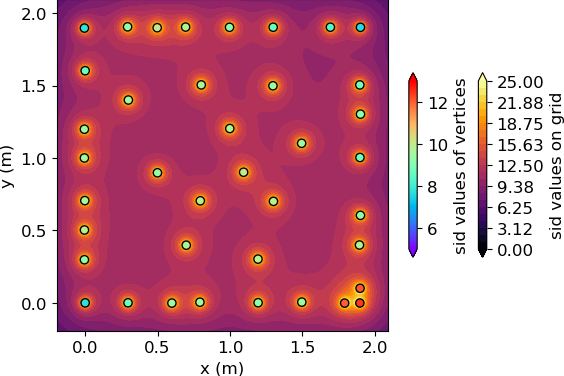}
		\captionsetup{margin={-15mm, 0cm}} 
		\caption{grid, sid-based}
	\end{subfigure} 
	\\
	\newcommand{\figheight}{28.5mm}
	\begin{subfigure}{.3\linewidth}
		\includegraphics[width=\figwidth]{./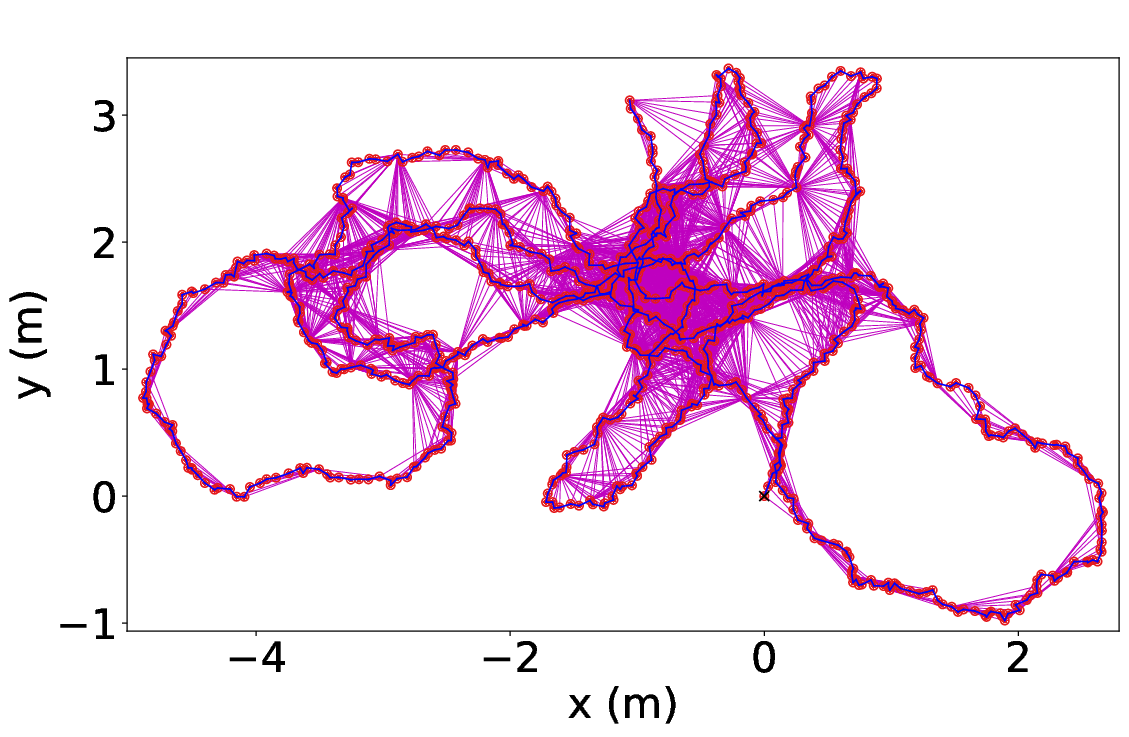} \\
		\includegraphics[trim=0 0 41mm 0, clip, height=\figheight]{./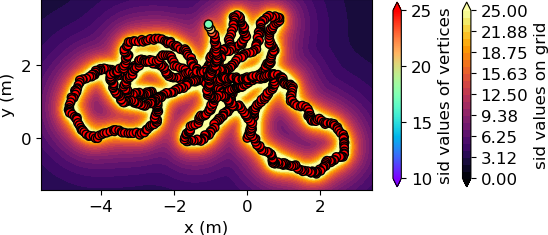}
		\captionsetup{margin={-5mm, 0cm}} 
		\caption{random, original}
	\end{subfigure}
	\begin{subfigure}{.3\linewidth}
		\includegraphics[width=\figwidth]{./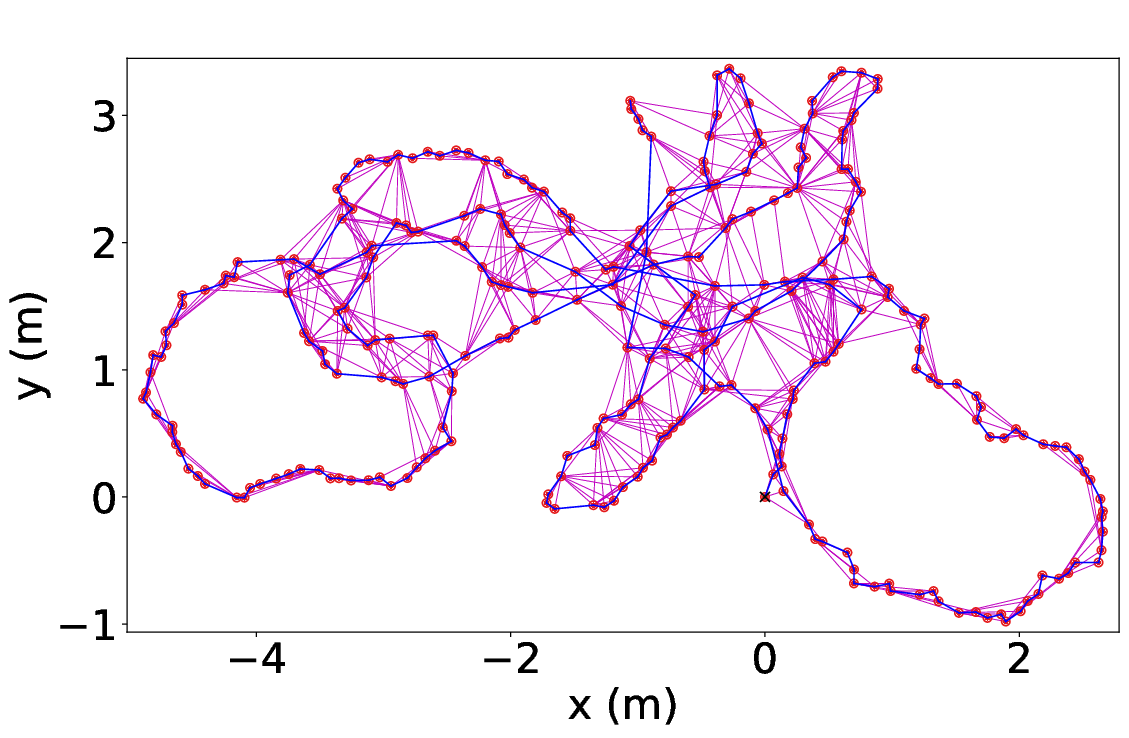}\\
		\includegraphics[trim=0 0 41mm 0, clip, height=\figheight]{./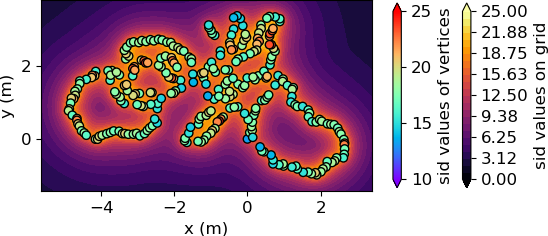}
		\captionsetup{margin={-8mm, 0cm}} 
		\caption{random, density-based}
	\end{subfigure}
	\begin{subfigure}{.36\linewidth}
		\includegraphics[width=\figwidth]{./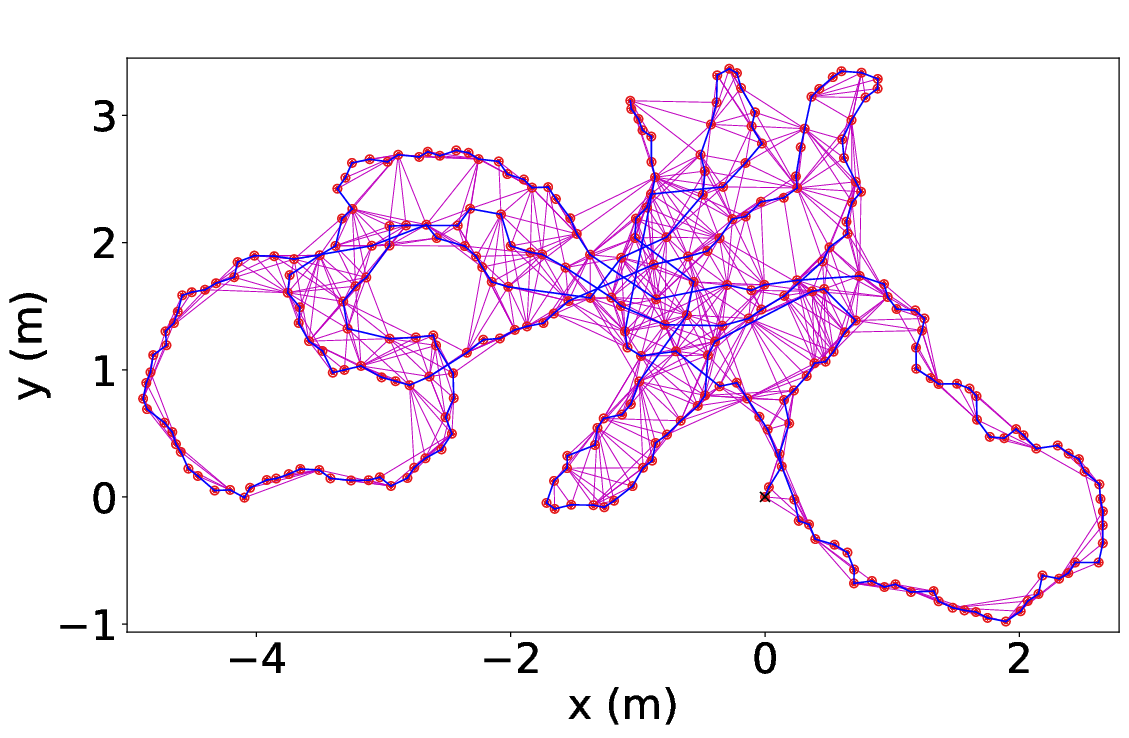}
		\includegraphics[height=\figheight]{./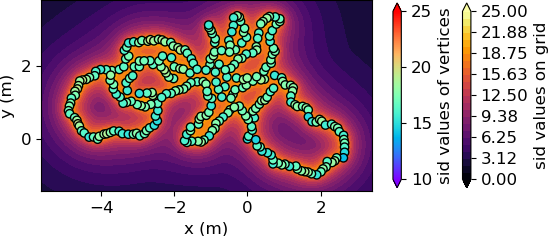}
		\captionsetup{margin={-15mm, 0cm}} 
		\caption{random, sid-based}
	\end{subfigure} 
	\caption{Vertex pruning approaches on synthetic examples (regular grid and a random trajectory).
    Odometry edges are shown in {\color{blue}blue} and loop closure edges are shown in {\color[HTML]{bf00bf} magenta}.
    The original graph is pruned using a simple density-based approach with $r=\SI{0.45}{\meter}$ (see Definition~\ref{def:radius}) or using the scale-invariant density (see Definition~\ref{def:sid}).
    Observe that the proposed sid-based approach yields more evenly spread vertices.}
	\label{fig:vertexpruning}
\end{figure*}

In order to keep the number of vertices inside the graph limited, it is necessary to carefully remove vertices over time.
In general, there are two strategies, selecting one or more vertices and 1) marginalizing them~\cite{Kretzschmar2010,Kretzschmar2011,Eade2010}, or 2) fusing them into one~\cite{Lazaro2018}.
In lidar SLAM, the second option has the disadvantage that the lidar scans have to be fused and that pose of the new vertex does not match the pose where the lidar scans were originally observed.
This complicates or invalidates potential successive processing steps such as raytracing of free space using scan poses as, e.g., proposed for change detection in~\cite{Underwood2013}.
In addition, all further information that may be associated with the involved vertices has to be fused as well.
Finally, the fused vertex is not part of the original trajectory and may be located on impassable terrain.
For these reasons, we only consider the first option in the following.

Hence, the key question is: How do we choose which vertices to remove? In general, one could consider the simultaneous removal of multiple vertices and analyze the combined effect of this operation.
However, this is difficult to analyze and for the sake of simplicity, we only consider removing a single vertex at a time.
When choosing which vertex to remove, we would like to pick a vertex that is of low importance to the overall map quality and whose removal degrades the expected future performance of the SLAM algorithm as little as possible.
In order to quantify the importance of a vertex, geometric or information-theoretic measures can be used.

Information-theoretic measures try to quantify the information contained in each vertex or, as proposed by Kretschmar et al.~\cite[Sec.~4.1]{Kretzschmar2010}, \cite[Sec.~IV]{Kretzschmar2011}, its associated laser scan.
The idea is then to remove vertices in such a way that the loss of information is minimized.
The main problem with this approach is that the actual information within a scan is hard to quantify.
Kretschmar et al. compute the information based on the occupation probabilities of cells in a probabilistic grid map.
However, this approach tends to consider scans with high noise, false measurements, or poor alignment to be high in information because they affect the probabilities inside the grid map more strongly than well aligned scans with little noise that agree well with other scans of the environment.
Also, this information measure is fairly expensive to compute because all points in scans within range of the current vertex contributed to the result.
This can be problematic especially for lidars with long range and high resolutions.

For these reasons, we propose to use geometric methods for vertex pruning.
The basic idea is to keep the density of vertices below a certain threshold across the entire map, i.e., we seek to remove vertices in places where a lot of vertices cluster within a small area.
For this purpose, we propose a novel density measure, which we call \emph{scale-invariant density}.

\subsection{Scale-invariant density}

Consider vertices with unique positions $v_1, \dots, v_n \in \mathbb{R}^2$, where $v_i \neq v_j$ for $i\neq j$. 

\begin{definition}
	\label{def:radius}
	For a radius $r > 0$, define the $r$-density of a vertex $v_i$ as
	\begin{align*}
	d_r(v_i) = \frac{ \# \{ \lVert v_j - v_i \rVert < r: 1\leq j \leq n , i \neq j \}  }{\pi r^2} \ ,
	\end{align*}
	where $\lVert\cdot\rVert$ is the Euclidean norm. This corresponds to the number of vertices (excluding $v_i$) inside the circle of radius $r$ around $v_i$ divided by the circle's area.
\end{definition}
However, this measure is strongly dependent on the choice of the radius $r$.
In particular, after pruning vertices inside the radius $r$, there may be a lot vertices whose distance from $v_i$ is just slightly larger than $r$, which leads to underestimating the actual density of vertices around $v_i$.

In a pruning algorithm, this can leave behind clusters of vertices that are just slightly more than $r$ apart.
This issue is illustrated in Fig.~\ref{fig:vertexpruning}, where we consider the effect of vertex pruning on two synthetic examples.

To address the deficiencies of the radius-based density $d_r(v_i)$, we propose the \emph{scale-invariant density}, which is independent of any fixed scale $r$. It is obtained by integrating over all $r \in (0, \infty)$. 
\begin{definition}
	\label{def:sid}
	The scale-invariant density of $v_i$ is given by
	$$
	d(v_i) = \int_0^\infty d_r(v_i) dr \ .
	$$
\end{definition}
As a result, this measure is not susceptible to the clustering problem mentioned above.
In order to efficiently compute the density measure, we use the following theorem.

\begin{theorem}
	\label{thm:sid}
	It holds that
	$$
	d(v_i) = \frac{1}{\pi} \sum_{k \in \{1,\dots,n \} \backslash \{i\} }  \lVert v_k - v_i \rVert^{-1}	
	$$
\end{theorem}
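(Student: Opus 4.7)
The plan is to simply unfold the definition, pull the finite counting sum out of the integral using a sum of indicator functions, and evaluate each resulting one-dimensional integral in closed form.

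First, I would rewrite the numerator in Definition~\ref{def:radius} as a sum of indicators, so that
$$
d_r(v_i) \;=\; \frac{1}{\pi r^2} \sum_{k \in \{1,\dots,n\}\setminus\{i\}} \mathbf{1}\!\left[\lVert v_k - v_i\rVert < r\right].
$$
Substituting this into Definition~\ref{def:sid} and interchanging the finite sum with the integral (legitimate since the sum has only $n-1$ terms, all nonnegative) yields
$$
d(v_i) \;=\; \frac{1}{\pi}\sum_{k \neq i} \int_0^{\infty} \frac{\mathbf{1}\!\left[\lVert v_k - v_i\rVert < r\right]}{r^2}\,dr.
$$

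Next, I would evaluate the inner integral for each fixed $k$. Writing $d_{ik} = \lVert v_k - v_i \rVert$, the assumption $v_i \neq v_k$ guarantees $d_{ik} > 0$, so the indicator vanishes on $(0, d_{ik}]$ and equals one on $(d_{ik}, \infty)$. Therefore
$$
\int_0^{\infty} \frac{\mathbf{1}\!\left[d_{ik} < r\right]}{r^2}\,dr \;=\; \int_{d_{ik}}^{\infty} r^{-2}\,dr \;=\; \frac{1}{d_{ik}}\ ,
$$
and summing over $k$ gives exactly the claimed formula.

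The hard part is almost nothing: the only things to check are that the integrand is integrable at both ends (near $r=0$ it is identically zero for $r < \min_{k \neq i} d_{ik}$, and for large $r$ it decays like $r^{-2}$), and that the distinctness hypothesis $v_i \neq v_j$ rules out any $1/0$ term in the final sum. Both are immediate, which is why no additional measure-theoretic machinery beyond swapping a finite sum with a Riemann integral is required.
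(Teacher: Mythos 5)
Your proof is correct, but it takes a genuinely different route from the paper. The paper sorts the other vertices by their distances $r_1 \leq \dots \leq r_{n-1}$ from $v_i$, observes that $d_r(v_i)$ is piecewise constant (equal to $k/(\pi r^2)$ on $(r_k, r_{k+1})$), splits the integral over these intervals, and then telescopes the resulting sum $\sum_k \frac{k}{\pi}(r_k^{-1} - r_{k+1}^{-1})$ via an Abel-summation-style rearrangement. You instead write the count as a sum of indicators and exchange the finite sum with the integral, reducing everything to $n-1$ independent elementary integrals $\int_{d_{ik}}^{\infty} r^{-2}\,dr = d_{ik}^{-1}$. Your argument is shorter, needs no sorting, requires no care about ties among the distances, and makes it transparent that each neighbor contributes $\frac{1}{\pi}\lVert v_k - v_i\rVert^{-1}$ independently --- which also immediately justifies the truncation to the $\hat{N}$ nearest neighbors mentioned after the theorem. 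The paper's decomposition, on the other hand, makes the piecewise-constant structure of $r \mapsto d_r(v_i)$ explicit, which is the picture one has in mind when motivating the scale-invariant density in the first place. Both proofs are valid; yours is arguably the cleaner derivation of the closed form.
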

\begin{proof}
  Wlog., we consider the parameter $v_n$ and assume that all other $v_i$ for $i = 1, \dots, n-1$ are sorted by distance $r_i =  \lVert v_i - v_n \rVert $ according to $r_1 \leq \dots \leq r_{n-1}$.
  Then, we can simplify
\allowdisplaybreaks
\begin{align*}
	d(v_n) =& \int_0^\infty d_r(v_n) dr \\
	=& \left(\sum_{k=1}^{n-2} \int_{r_k}^{r_{k+1}}  d_r(v_n) dr \right) + \int_{r_{n-1}}^\infty d_r(v_n) dr \\
	=&	\left(\sum_{k=1}^{n-2} \int_{r_k}^{r_{k+1}} \frac{k}{\pi r^2} dr\right) + \int_{r_{n-1}}^\infty \frac{n-1}{\pi r^2} dr \\
	=& \left(\sum_{k=1}^{n-2} \frac{k}{\pi} \left(r_k^{-1} - r_{k+1}^{-1} \right) \right) + \frac{n-1}{\pi} r_{n-1}^{-1} \\
	=& \frac{1}{\pi} \sum_{k=1}^{n-1} r_k^{-1} \qedhere
\end{align*}
\end{proof}
\newcommand{\sidtruncation}{\hat{N}}
For practical use, this sum can be truncated to the closest $\sidtruncation \in \mathbb{N}$ neighbors, which speeds up computation and leads to a more localized version of the density.
Based on Theorem~\ref{thm:sid}, it is easy to show that scaling the positions of the vertices corresponds to inversely scaling the density.

\subsection{Vertex Pruning Algorithm}
\newcommand{\thresholdvertices}{\hat{n}}
\newcommand{\thresholdsid}{\hat{s}}
\newcommand{\thresholdodochain}{\hat{m}}

Based on the density function introduced above, we propose a vertex pruning algorithm.
This algorithm tries to find the vertex with highest density, marginalizes it and repeats until the graph size is below a predefined threshold $\thresholdvertices \in \mathbb{N}$ or no more vertices have a density below the density threshold $\thresholdsid \in (0,\infty)$.
Furthermore, it can be desirable to exclude certain vertices from pruning.
For example, we always keep the most recent $\thresholdodochain \in \mathbb{N}$ vertices to prevent pruning vertices prematurely, which could lead to poor robustness.
We define the set of prunable vertices as $\tilde{V} \subseteq V$ and only consider theses vertices as candidates for pruning.
Note that the density $d(v_i)$ is still calculated using all vertices including non-prunable ones.
Pseudocode of the proposed algorithm is given in Algorithm~\ref{algo:vertexpruning}.

\begin{algorithm}[t]
	\caption{Vertex Pruning}
	\label{algo:vertexpruning}
	\KwIn{graph $(V,E)$, thresholds $\thresholdvertices, \thresholdsid$}
	\KwOut{updated graph $(V,E)$}
	\Repeat{{$\#\tilde{V} \leq \thresholdvertices$}}
	{
		$\tilde{V} \gets \{ v \in V : v$ can be pruned$\}$\;
		\tcc{Find vertex with largest sid}
		$v_\text{max} \gets \argmax_{v \in \tilde{V}} d(v)$\;
		\tcc{Remove vertex if density above $\thresholdsid$}
		\eIf{$d(v_\text{max}) > \thresholdsid$}
		{
			$(V,E) \gets $ marginalize$\big((V,E), v_\text{max}\big)$ \;	
		}
		{
			break\;
		}
	}
	\Return $(V,E)$\;
\end{algorithm}

It should be noted that the proposed algorithm does not consider the direction the robot was facing at each vertex.
This is fine for robots whose field of view is 360\degree or close to that.
However, for robots with a small field of view it may be necessary to first partition the vertices by viewing angle and prune each partition separately.

\subsection{Marginalization}
Once we have chosen a vertex to prune from the graph, we need to consider how to perform the removal of a given vertex.
The graph encodes the joint probability distribution of the random variables represented by all vertices.
Thus, removing a vertex corresponds to marginalizing a set of random variables.

Marginalization is typically performed by creating an n-ary constraint between all neighbors of the pruned vertex~\cite[Sec.~II]{CarlevarisBianco2014}.
Some authors approximate this n-ary constraint by connecting all neighbors of the pruned vertex pair-wise, i.e., vertices adjacent to the pruned vertex will form a clique.
This leads to a huge number of edges if the pruned vertex has a lot of neighbors or if several vertices are pruned successively~\cite[Sec.~V-A]{Kretzschmar2011}.
As a result, this method for marginalization is infeasible in practice, at least unless an aggressive edge pruning scheme is employed to remove most of these edges later (see Sec.~\ref{sec:edgepruning}).

To obtain a more sparse approximation of the graph in the neighborhood of the pruned vertex, several authors have proposed using a Chow--Liu tree~\cite{Chow1968} either directly from an n-ary constraint~\cite{CarlevarisBianco2014} or by selecting a subset of the fully connected graph of binary constraints among all neighbors~\cite[Sec.~V-C]{Kretzschmar2011}.
Either way, the Chow--Liu tree approximates the neighborhood of the marginalized vertex using a spanning tree weighted by the mutual information between the vertices involved.
As the local tree structure might be too sparse in practice, other subgraphs have been considered, e.g., the Cliquey approach~\cite{Mazuran2015}.

\subsubsection{Impact of Wrong Loop Closures on Marginalization}
An important issue that has not been sufficiently considered in related work on graph pruning is the influence of incorrect loop closures during marginalization.
Typical SLAM algorithms can be made resilient to a small number of incorrect loop closures by using robust kernels in the error function or by specifically determining which loop closures are outliers using an expectation maximization algorithm~\cite{Lee2013}.
However, marginalization can amplify wrong loop closures. After multiple marginalization steps, even a single wrong loop closure might be amplified enough to lead to divergence.
This is particularly problematic because wrong loop closures often do not have higher uncertainty according to their information matrix than correct loop closures because the scan matching algorithm only provides a local uncertainty estimate.

Consider the following example.
We would like to prune a vertex $v_0$ with $n$ edges to $v_1,\dots, v_n$.
Let us assume that a single edge $\edge{v_0}{v_i}, i \in \{1, \dots, n\}$ is incorrect, i.e., the ratio of incorrect edges is $1/n$.
After marginalization of $v_0$, the combined n-ary constraint would be incorrect.
We now approximate this constraint by creating pairwise edges $\edge{v_i}{v_j}$ for $i \neq j \in \{1, \dots, n\}$.
This yields $n\cdot(n-1)/2$ newly created edges, of which $n-1$ are wrong edges.
Therefore, the number of incorrect edges increased from $1$ to $n-1$ and the ratio of incorrect edges increased from $1/n$ to $2/n$.
In addition, if there already was an edge between one of $v_0$'s neighbors $\edge{v_i}{v_j}$ for $i\neq j \in \{1, \dots, n\}$ before marginalization, the existing and new edges would get fused.
This leads to an incorrect edge if either of the original edges was incorrect, i.e., the ratio of incorrect edges would be even larger.

In practice however, we make the observation that odometry edges are typically much more reliable than loop closure edges even if they possess the same uncertainty.
While odometry can drift over time and may be somewhat inaccurate, it is usually not affected by outliers with huge errors like loop closures.
As a result, we propose a marginalization algorithm that is founded on the reliability of the odometry chain and is fairly robust to wrong loop closures.

\subsubsection{Marginalization Algorithm}
In \cite[Sec.~4.2]{Kretzschmar2010}, an approximate marginalization was proposed where all edges of the pruned vertex are collapsed into a single neighboring vertex.
This vertex is chosen such that the sum of lengths of all resulting edges is minimized.
We propose a similar method that also collapses edges into adjacent vertices.
However, we do not collapse all edges into a single vertex but instead move each loop closure forward or backward along the odometry chain depending on the distance between the vertices of the newly created edge.
Pseudocode of this procedure is given in Algorithm~\ref{algo:marginalization}.

\begin{algorithm}[t]
	\caption{Marginalization}
	\label{algo:marginalization}
	\KwIn{graph $(V,E)$, vertex $v \in V$ to be marginalized}
	\KwOut{updated graph $(V,E)$}
	$L \gets \{e \in E : e$ adjacent to $v \land e $ loop closure $ \}$\;
	$e_\text{in} = (v_\text{prev},v) \gets $ odo edge into $v$\;
	$e_\text{out} = (v, v_\text{next}) \gets $ odo edge out of $v$\;
	$V \gets V \backslash \{v\}$\;
	$E \gets E \backslash \{e \in E: e$ adjacent to $v\}$\;
	\tcc{Move loop closure edges along odometry chain}
	\For{$e = (v, v_\text{other}) \in L$} 
	{
		$d_\text{in} = \lVert v_\text{prev} - v_\text{other} \rVert$\;
		$d_\text{out} =  \lVert v_\text{next} - v_\text{other} \rVert$\;
		\eIf{$d_\text{in} < d_\text{out}$}
		{
			$E \gets E \cup \{$concatenateEdges$(e_\text{in}, e) \}$\;
		}
		{
			$E \gets E \cup \{$concatenateEdges$(e, e_\text{out}) \}$\;
		}
	}
	\Return $(V,E)$\;
\end{algorithm}

To concatenate edges, we need three elementary operations, composition of two edges, combination of two edges and inversion of an edge, see \cite[Sec.~VII-B]{Eade2010}.
The equations for these operations depend on the exact error function used in the optimizer (in particular the frame covariances are represented in) and can be computed in closed-form.

In the proposed algorithm, the number of wrong loop closures does not change during marginalization.
Considering the same example as above, our algorithm reduces the number of edges from $n$ to $n-1$, and thus the ratio of incorrect loop closures rises only slightly from $1/n$ to $1/(n-1)$.
When combining edges, we can detect contradictory edges by considering the differences in their relative poses or the Mahalanobis distance.
If an odometry edge and a loop closure contradict, we trust the odometry edge and delete the loop closure.
If two loop closures contradict, we delete them both as losing a correct loop closure is usually preferable to having an incorrect loop closure in the graph.

\subsection{Evaluation}

To investigate the behavior of the proposed algorithm in the presence of wrong loop closures, we created a synthetic example based on a $30 \times 30$ grid similar to Fig.~\ref{fig:vertexpruning} and randomly replaced a given percentage of the loop closures with incorrect loop closures.
Then, we applied pruning using different marginalization methods, optimized the graph again and compared the vertex positions to the true positions.
In Fig.~\ref{fig:false_loop_closure_eval}, we show the results from 50 Monte Carlo runs with different random broken loop closures.
It can be seen that the proposed algorithm even slightly outperforms the original SLAM algorithm without pruning, which is mainly due to the detection and removal of some of the false loop closures when they contradict other edges.
The Chow-Liu approach performs significantly worse and diverges completely in some cases.

\begin{figure}
	\centering
	\includegraphics[width=.8\linewidth]{./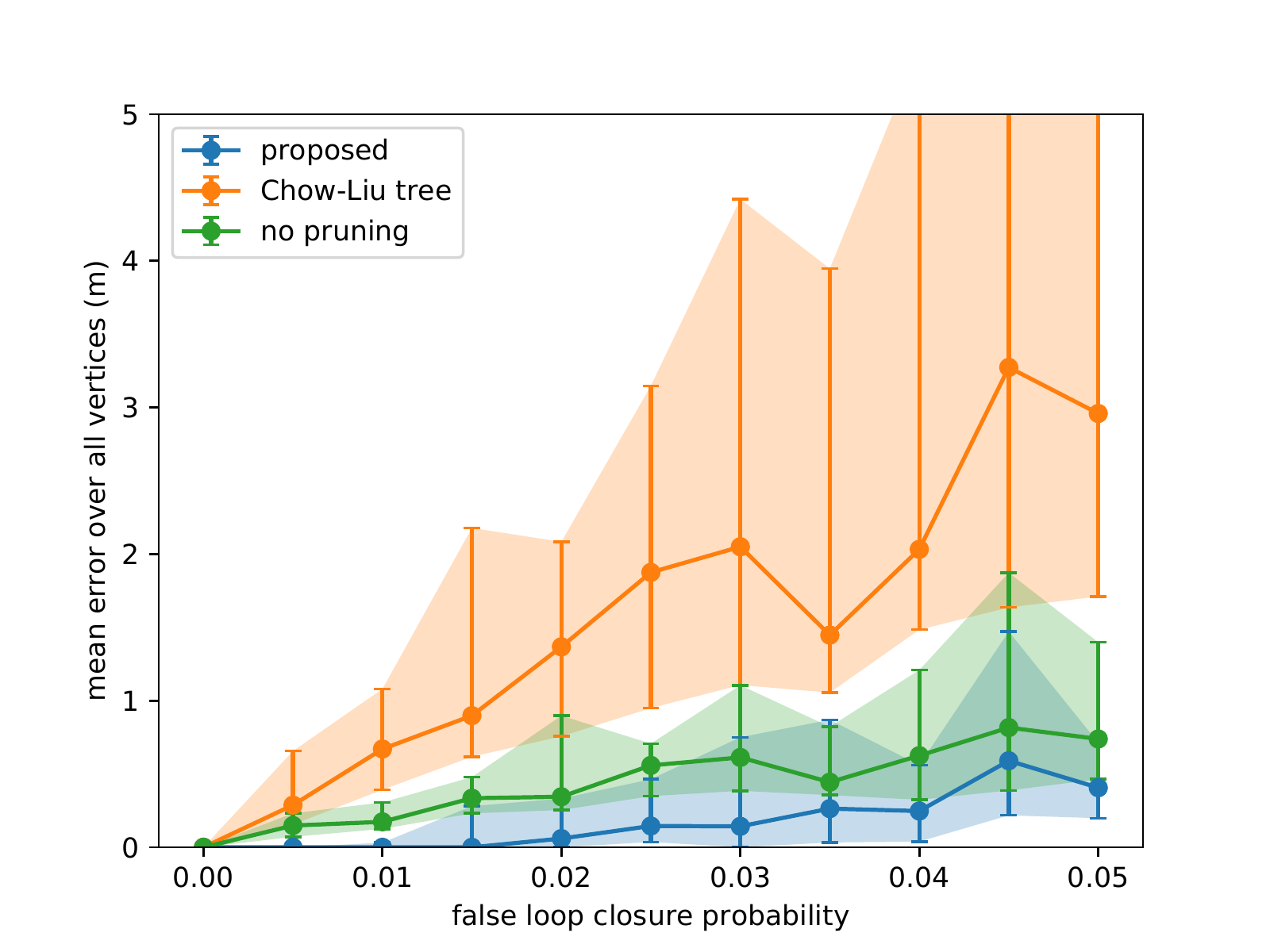}
	\caption{Influence of wrong loop closures on the vertex position error. We show the median, 25\%, and 75\% quantiles over 50 Monte Carlo runs.}
	\label{fig:false_loop_closure_eval}
\end{figure}

\section{Edge Pruning}
\label{sec:edgepruning}
In addition to vertex pruning and the removal of edges during marginalization, we also propose an independent edge pruning step, which removes superfluous edges without touching vertices at all.
This is mostly beneficial in areas where vertices are not particularly dense but a lot of loop closures have been found.
Also, depending on the particular algorithm, a lot of edges may be created during marginalization.
In general, removing an edge is equivalent to removing one constraint from the graph and means losing the information encoded in that edge.

\newcommand{\thresholdedges}{\hat{e}}

To choose which edges to prune, we consider the vertex with the largest number of edges and remove one of its edges via an edge selection criterion.
This step is repeated until all vertices have less than a predefined number of edges $\thresholdedges \in \mathbb{N}$.
The key question is how to choose the edge selection criterion.
Eade et al. employed pruning based on the edge residuals. Edges with the smallest residual are pruned first~\cite[Sec.~VIII-C]{Eade2010}.
However, using the residual (or the $\chi^2$ error) means that edges with large errors (e.g., noisy edges, poor scan matches, wrong loop closures) will be kept and edges with small errors (good matches) will be discarded.

We therefore propose to use the trace or determinant of the edges' covariance/information matrices as an alternative criterion.
As the differences between these measures seem to be small in practice, we decided to use the trace of the information matrix, which is efficient to compute.
We favor keeping edges with more certain estimates over edges with less certain estimates, i.e., edges with little information will be pruned.

\newcommand{\thresholddistancefactor}{\hat{d}}
\newcommand{\shortestpath}{d}
\newcommand{\distancefactor}{\widetilde{d}}

Removing edges comes with the risk of removing important global loop closures or even disconnecting the graph.
To avoid pruning edges that keep the graph together, we consider the factor $\distancefactor$ by which the shortest path gets longer when removing an edge.
Assume we want to prune the edge $\edge{v_i}{v_j}$ with length $\lVert v_i - v_j \rVert$.
If we removed this edge, the length of the shortest path from $v_i$ to $v_j$ would become $\shortestpath_{G \setminus \edge{v_i}{v_j}}(v_i, v_j)$, which can be computed efficiently using the A* algorithm~\cite{Hart1968}.
We therefore compute the ratio $\distancefactor = \frac{\shortestpath_{G \setminus \edge{v_i}{v_j}}(v_i, v_j)}{\lVert v_i - v_j \rVert}$ and only allow pruning if it does not exceed a predefined threshold $\thresholddistancefactor > 1$.
Furthermore, we only prune loop closure but no odometry edges.

\section{Evaluation}
\newcommand{\paramscautious}{p_\text{cautious}}
\newcommand{\paramsaggressive}{p_\text{aggressive}}
\newcommand{\paramsreference}{p_\text{reference}}

\begin{figure*}[h!]
	\centering
	\begin{subfigure}{0.49\linewidth}
		\includegraphics[width=\linewidth]{./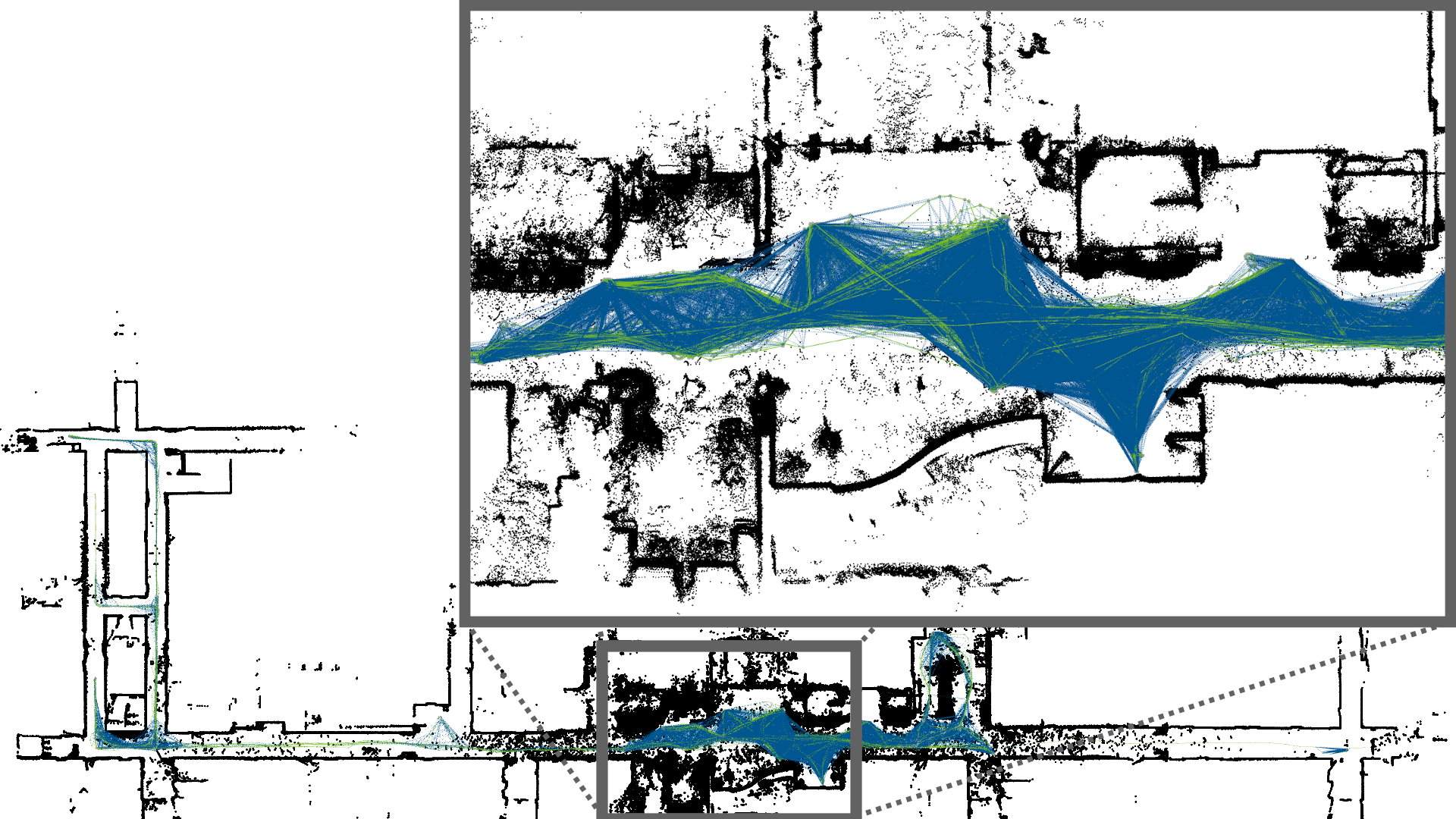}
		\caption{
			$\paramsreference$, \num{8520} vertices, \num{106785} edges.
		}
	\end{subfigure}
	\begin{subfigure}{0.49\linewidth}
		\includegraphics[width=\linewidth]{./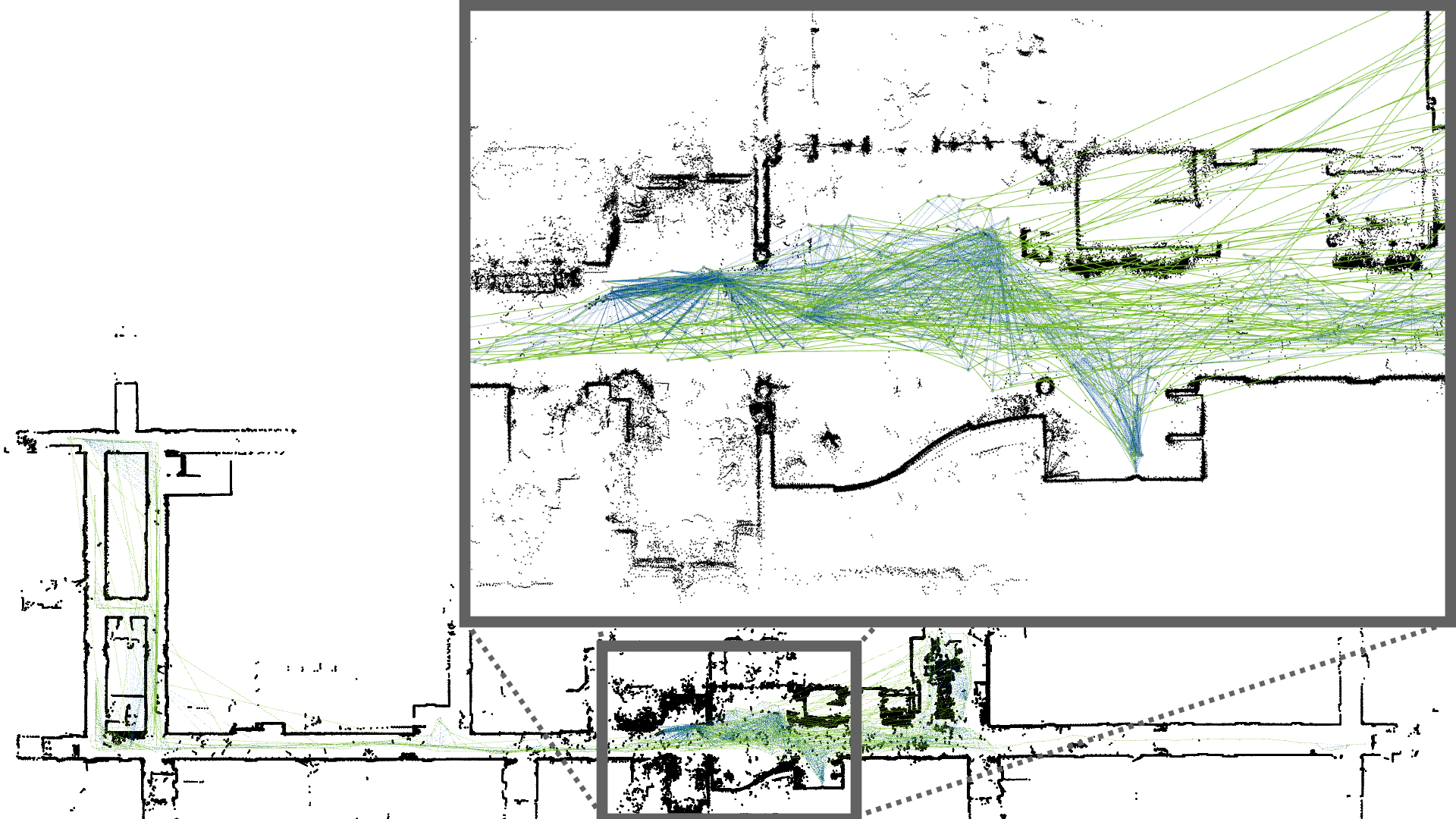}
		\caption{
			$\paramsaggressive$, \num{826} vertices, \num{3606} edges.
		}
	\end{subfigure}
	
	\begin{subfigure}{0.49\linewidth}
		\includegraphics[width=\linewidth, trim=0 350 0 0mm, clip]{./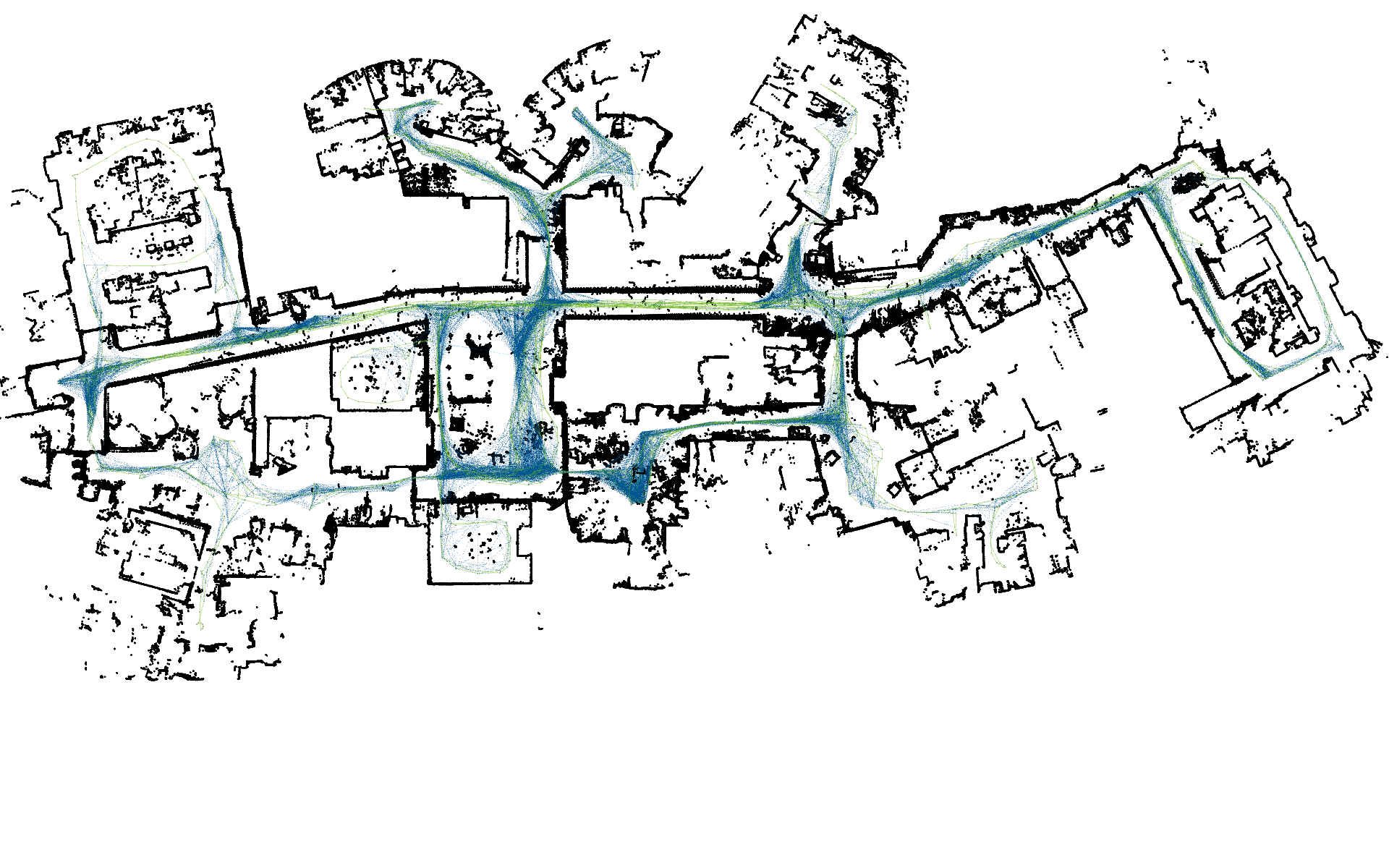}
		\caption{
			$\paramsreference$, \num{3062} vertices, \num{20459} edges.
		}
	\end{subfigure}
	\begin{subfigure}{0.49\linewidth}
		\includegraphics[width=\linewidth, trim=0 350 0 0mm, clip]{./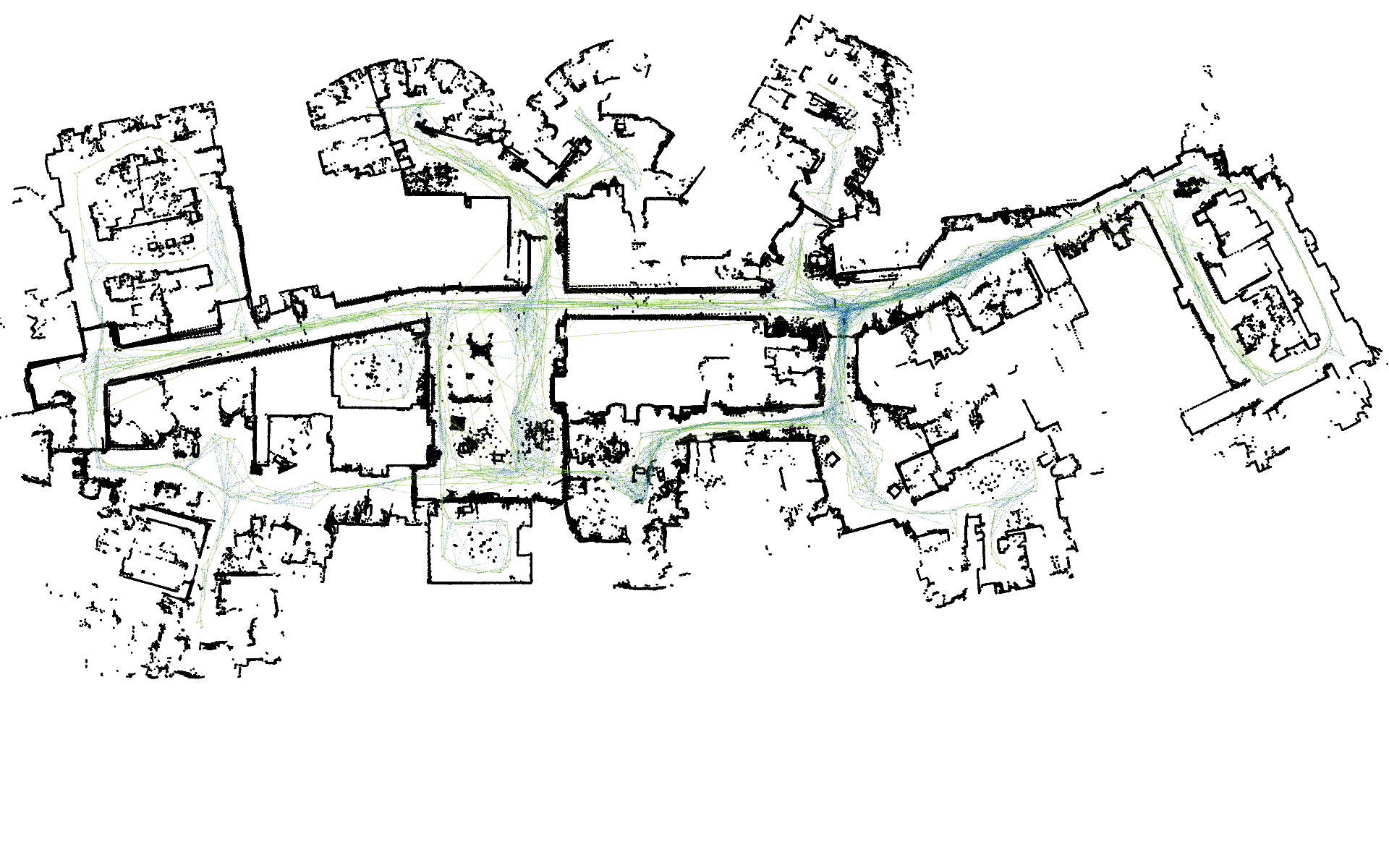}
		\caption{
			$\paramsaggressive$, \num{1199} vertices, \num{3914} edges.
		}
	\end{subfigure}
	\caption{
		Maps of LCAS Strands Care Home (top) and MIT Stata Center (bottom), created with different SLAM settings.
	}
	\label{fig:maps_with_and_without_pruning}
\end{figure*}

\begin{table}
	\centering
	\caption{Pruning parameters used during evaluation.}
	\label{tab:pruning_parameters}
	\begin{tabular}{ lcccccc }
		\toprule
		Config Name  & $\thresholdsid$ & $\sidtruncation$ & $\thresholdvertices$ & $\thresholdodochain$ & $\thresholdedges$ & $\thresholddistancefactor$  \\
		\midrule
		$\paramsaggressive$ & $5.0 $ & 10 & 50 & 50 & 5 & 5.0 \\
		$\paramscautious$ & $15.0 $ & 10 & 50 & 50 & 5 & 5.0 \\
		
		$\paramsreference$ & \multicolumn{6}{c}{No pruning} \\
		\bottomrule
	\end{tabular} 
\end{table}
To evaluate our graph pruning method, we apply it as part of a graph-based SLAM algorithm with 360$^\circ$ LIDAR data and evaluate the SLAM performance depending on different graph pruning parameters (see Table~\ref{tab:pruning_parameters}) on two public datasets (see Table~\ref{tab:datasets}).

Our SLAM algorithm uses g2o~\cite{Kuemmerle2011} for graph optimization.
Vertices in our graph have a single laser scan associated to them.
New vertices are added in regular distances as keyframes; they are connected by odometry edges that encode incremental localization information.
To obtain an odometry edge, we perform early fusion of scan-to-scan matches, wheel odometry, and IMU, if available.
This makes our odometry edges quite reliable w.r.t. outliers and motivates our focus on the odometry chain for this work.
Additionally, we add loop closure edges by matching scans of non-subsequent vertices.
For scan matching, we use the Normal Distributions Transform (NDT)~\cite{Biber2003}.
Place recognition techniques are employed to merge graphs of previous SLAM runs into the current graph.
This allows us to build a consistent map using recordings from multiple days without knowing the robot's starting pose in a shared reference frame beforehand.
Dynamic objects are handled with a technique inspired by~\cite{Underwood2013}.

\subsection{Datasets}
\begin{table}
	\centering
	\caption{Datasets used for evaluation.}
	\label{tab:datasets}
	\begin{tabular}{ llrr }
		\toprule
		\textbf{Dataset} & \textbf{Run} & \textbf{Travelled} & \textbf{Duration} \\
		& & (m) & (hr) \\
		\midrule
		\multirow{4}{*}{\shortstack[l]{\textbf{LCAS Strands} \\ \textbf{Care Home}}}
		& 2016-11-25          & 2634 & 7.68 \\
		& 2016-12-02          & 2092 & 9.32 \\
		& 2016-12-16          & 2081 & 8.00 \\
		\cline{2-4} & Sum     & 6807 & 25.00 \\
		\midrule
		\multirow{6}{*}{\shortstack[l]{\textbf{MIT Stata}\\ \textbf{Center}}}
		& 2012-01-18-09-09-07 & 683  & 0.60 \\
		& 2012-01-25-12-14-25 & 348  & 0.33 \\
		& 2012-01-25-12-33-29 & 239  & 0.23 \\
		& 2012-01-28-11-12-01 & 635  & 0.60 \\
		& 2012-02-02-10-44-08 & 1003 & 0.87 \\
		\cline{2-4} & Sum     & 2908 & 2.63 \\
		\bottomrule
	\end{tabular}
\end{table}

As our first evaluation dataset, we use the Care~Home\footnote{\scriptsize \url{https://lcas.lincoln.ac.uk/nextcloud/shared/datasets/aaf.html}} dataset from the LCAS-STRANDS long-term dataset collection, initially recorded for~\cite{Krajnik2016}. 
We picked the first three days with long trajectories ($>$\SI{2000}{\meter}) that had no recording issues (see Table~\ref{tab:datasets}).
Note that this dataset contains a lot of dynamic objects.
Since it does not come with ground truth information, we run our SLAM algorithm without pruning ($\paramsreference$) and use it as a baseline for comparison of trajectories and vertex positions of the pruned graph.
This way, we can ascertain how much pruning changes the results of the SLAM algorithm and thus degrades the performance.
This experiment design measures how pruning impacts relative SLAM performance rather than the SLAM algorithm's performance as a whole.


As our second evaluation dataset, we use five mapping sessions of the second floor of the MIT~Stata~Center\footnote{\scriptsize\url{http://projects.csail.mit.edu/stata/index.php}}.
We have chosen the same datasets as~\cite{Lazaro2018} (see Table~\ref{tab:datasets}).
We use the laser data from the base of the PR2 robot (a Hokuyo UTM-30LX Laser), the Microstrain 3DM-GX2 IMU and the robot's raw wheel odometry.
This dataset comes with ground truth information,
so in addition to the pruning/no-pruning comparison, we can also evaluate our SLAM algorithm's performance as a whole.

\subsection{Effects on Runtime and Memory}

\begin{figure}
	\centering
	\begin{subfigure}{\linewidth}
		\centering
		\includegraphics[width=0.45\linewidth]{./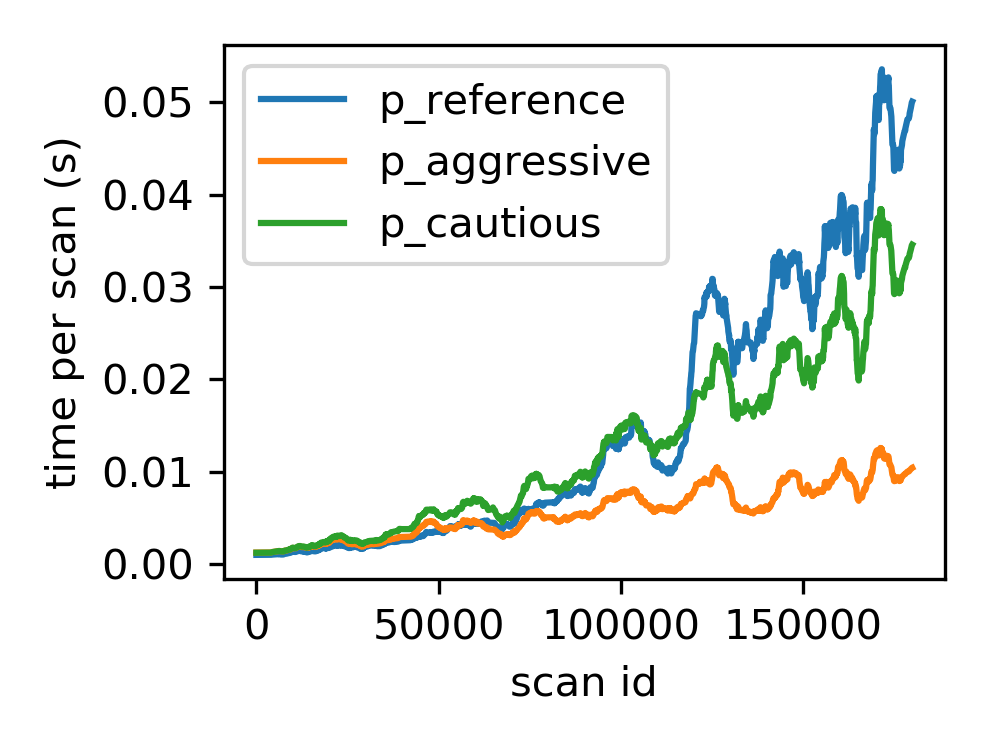}
		\includegraphics[width=0.45\linewidth]{./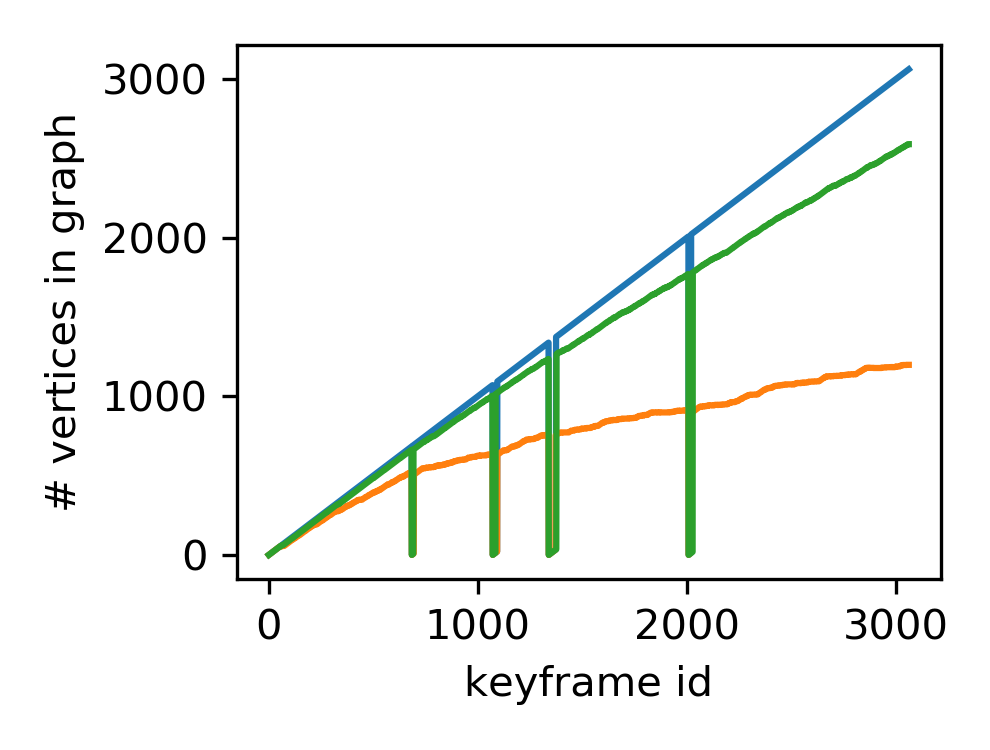}
		\caption{MIT Stata Center}
	\end{subfigure}
	\begin{subfigure}{\linewidth}
		\centering
		\includegraphics[width=0.45\linewidth]{./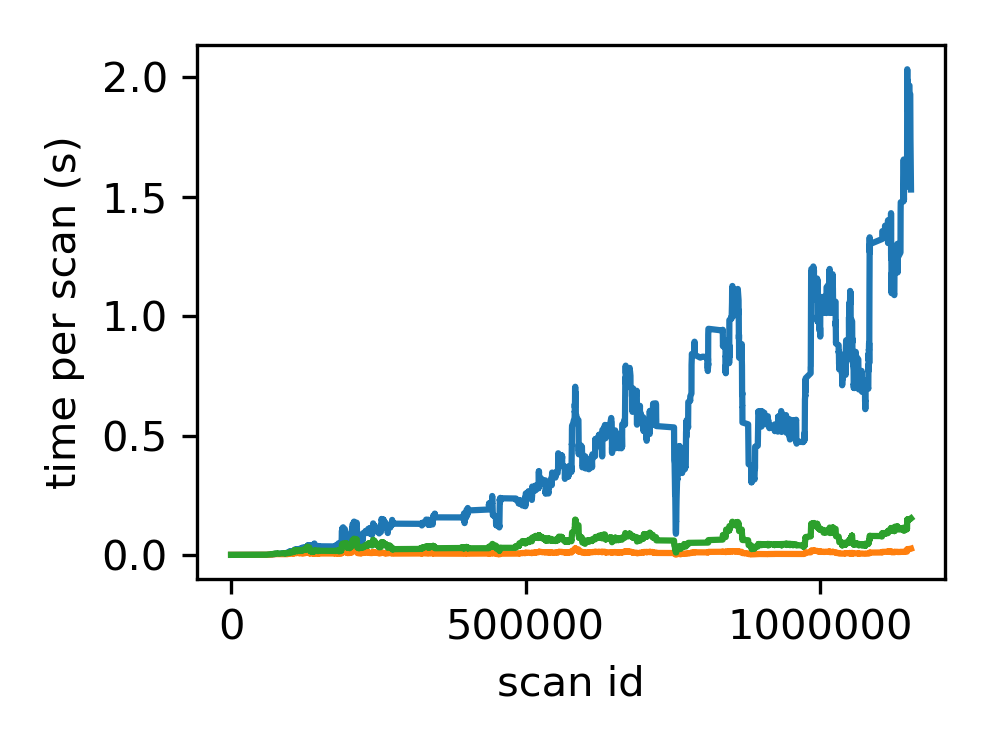}
		\includegraphics[width=0.45\linewidth]{./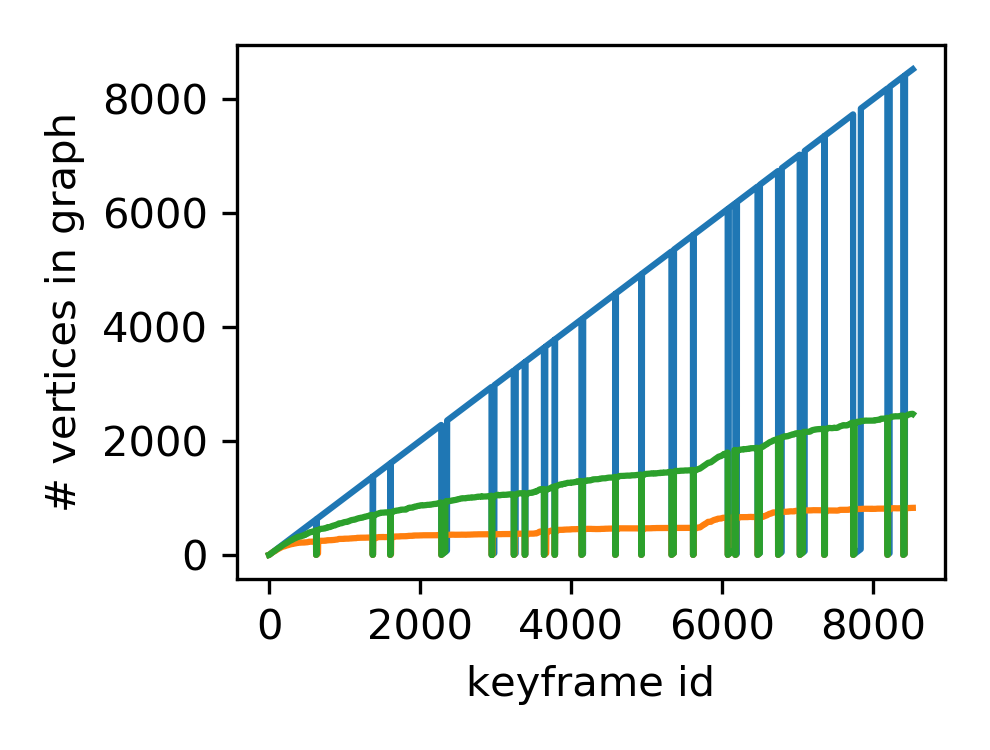}
		\caption{LCAS-STRANDS Care Home}
	\end{subfigure}
	\caption{Time per scan (left) and number of vertices (right) over time for our two evaluation datasets.
		Note that the irregularities in the plot are caused by relocalization after a new mapping run of a dataset has started.}
	\label{fig:time_per_scan_and_vertices_in_graph}
\end{figure}

We run our SLAM algorithm on both datasets (see Figure~\ref{fig:maps_with_and_without_pruning}) and measure the time it takes to process a single scan (see Figure~\ref{fig:time_per_scan_and_vertices_in_graph}).
This includes the duration of scan matching, loop closure search, and the graph optimization.
Note that, especially for bigger graphs, the graph optimization makes up the largest portion of the measured time per scan.
Considering time per scan and the LIDAR's frequency allows us to judge whether the algorithm can run in realtime on a robot.
All presented results were obtained on a single core of an Intel\textregistered~i7-8650U~@~1.90GHz CPU.

In order to assess how pruning improves the memory requirements of our SLAM algorithm, we keep track of the number of vertices in its graph.
Only laserscans that are associated to a vertex are needed long-term.
So, whenever a vertex gets pruned, the associated laserscan can also be deleted.
As visualized on the right side of Figure~\ref{fig:time_per_scan_and_vertices_in_graph}, the number of vertices grows linearly with time when no pruning is performed ($\paramsreference$).
However, if pruning is performed, the number of vertices begins to converge to a level dependent on the $\thresholdsid$ and the size of the environment.
When using the $\paramscautious$ configuration for the MIT Stata Center dataset, this barely starts to happen.
The effect is more evident for the longer LCAS-Strands Care Home dataset, or when using the $\paramsaggressive$ configuration.
Note that we start with a new graph for each individual bag file of a dataset, so the number of vertices drops down to zero at that point in time.
After place recognition succeeds, we merge the new and the old graphs and the number of vertices jumps back up.

\subsection{Effects on Trajectory Error and Map Quality}
\begin{table*}
	\centering
	\caption{SLAM results vs. the ground truth information (top) and vs. reference SLAM configuration (center and bottom).
		Used metrics: Mean trajectory error (TE), map error (ME) and relative map error (RME) and their respective standard deviation (SD).
	}
	\label{tab:slam_errors}
	\begin{tabular}{ llccccccrr }
		\toprule
		\textbf{Dataset} & \textbf{Config}     & \multicolumn{2}{c}{\textbf{TE Mean $\pm$ SD}} & \multicolumn{2}{c}{\textbf{ME Mean $\pm$ SD}} & \multicolumn{2}{c}{\textbf{RME Mean $\pm$ SD}} & \textbf{Runtime} & \textbf{Speedup} \\
		& \textbf{name}       & (m)               & (deg)              & (m)               & (deg)              & (m)                & (deg)                                    &    (s)         &      \\
		\midrule
		\multirow{8}{*}{\shortstack[l]{\textbf{MIT Stata} \\ \textbf{Center}}}
		& \multicolumn{9}{l}{{Comparison with ground truth}} \\
		\cline{2-10}
		& $\paramsreference$  & $0.45  \pm 0.37$  & $1.29  \pm 2.15$  & $0.10  \pm 0.07$  & $0.41  \pm 0.57$  & $0.02  \pm 0.03$  & $0.22  \pm 0.31$  & $\num{1788}$  &  ---    \\
		& $\paramscautious$   & $0.47  \pm 0.42$  & $1.28  \pm 2.19$  & $0.12  \pm 0.09$  & $0.46  \pm 0.84$  & $0.02  \pm 0.03$  & $0.25  \pm 0.36$  & $\num{1178}$  & $1.52$  \\
		& $\paramsaggressive$ & $0.47  \pm 0.33$  & $1.29  \pm 2.17$  & $0.13  \pm 0.11$  & $0.58  \pm 1.02$  & $0.03  \pm 0.04$  & $0.37  \pm 0.50$  & $664$   & $2.69$  \\
		\cmidrule{2-10}
		& \multicolumn{9}{l}{{Comparison with SLAM ($\paramsreference$)}} \\
		\cline{2-10}
		& $\paramsreference$  & ---               & ---               & ---               & ---               & ---               & ---               & $\num{1788}$  &  ---    \\
		& $\paramscautious$   & $0.06  \pm 0.08$  & $0.22  \pm 0.28$  & $0.11  \pm 0.07$  & $0.33  \pm 0.25$  & $0.00  \pm 0.00$  & $0.07  \pm 0.09$  & $\num{1178}$  & $1.52$  \\
		& $\paramsaggressive$ & $0.07  \pm 0.07$  & $0.24  \pm 0.27$  & $0.13  \pm 0.08$  & $0.39  \pm 0.34$  & $0.01  \pm 0.01$  & $0.13  \pm 0.21$  & $\num{664}$   & $2.69$  \\
		\midrule
		\multirow{4}{*}{\shortstack[l]{\textbf{LCAS Strands} \\ \textbf{Care Home}}}
		& \multicolumn{9}{l}{{Comparison with SLAM ($\paramsreference$)}} \\
		\cline{2-10}
		& $\paramsreference$  & ---               & ---               & ---               & ---               & ---               & ---               & $\num{56905}$ & ---     \\
		& $\paramscautious$   & $0.07 \pm 0.13$   & $0.23 \pm 0.31$   & $0.11  \pm 0.10$  & $0.31  \pm 0.25$  & $0.02  \pm 0.02$  & $0.17  \pm 0.07$  & $\num{7109}$  & $8.00$  \\
		& $\paramsaggressive$ & $0.10 \pm 0.21$   & $0.36 \pm 0.39$   & $0.06  \pm 0.08$  & $0.28  \pm 0.25$  & $0.03  \pm 0.04$  & $0.20  \pm 0.22$  & $\num{1399}$  & $40.68$ \\
		\bottomrule
	\end{tabular}
	\vspace{-5mm}
\end{table*}

In the following, we consider several different error measures to assess the accuracy of the SLAM algorithm. 
%
The trajectory error \emph{(TE)} measures the absolute position errors of the robot's trajectory during execution.
We calculate this metric by capturing our SLAM algorithm's pose estimate after each scan.
This pose estimate gets compared to the reference pose estimate or to the ground truth.
This metric does not include corrections to the estimated trajectory that become available later, e.g., through loop closures.
%
The map error \emph{(ME)} shows the absolute position errors of the final map, after all data was processed.
We calculate this metric by comparing the final poses of vertices in the SLAM graph to a reference graph or the ground truth.
%
Both the trajectory error and the map error assess global consistency. 
For example, these metrics will punish small errors in orientation severely, especially if the map is big.
We argue that for most use-cases of lifelong SLAM, local consistency is sufficient.
Therefore, we suggest to focus on the relative map error \emph{(RME)} as proposed by~\cite[Section~III-A]{Burgard2009comparison}.
We calculate this metric by comparing the relative poses between subsequent vertices in the final SLAM graph with the respective relative pose from a reference graph or the ground truth.
Note that this procedure may degenerate to assess global consistency, instead.
This can happen if so many vertices get pruned that subsequent vertices are not close to each other anymore.
In practice, this was not an issue throughout our evaluation, because our pruning thresholds are not that strict.

In Table~\ref{tab:slam_errors}, we compare SLAM results obtained with different configurations (see Table~\ref{tab:pruning_parameters}) to the ground truth of the MIT Stata Center dataset.
Our experiments show that graph pruning results in a speedup of $2.69\times$ at the cost of slightly increased errors on this dataset.
If no pruning is performed, the resource requirements of the SLAM algorithm do not stop growing.
This is naturally more evident for the longer LCAS Strands Care Home dataset, where the speedup is $40.68\times$. Note that the algorithm without pruning is not able to maintain real-time performance towards the end of the run.
When using our pruning method, however, this is not the case.
Also, we compare the results when performing graph pruning with the results of the $\paramsreference$ configuration, which does not perform pruning.
This evaluation is helpful when tuning pruning parameters for specific robots or environments that have different resource requirements, as it does not require ground truth information.
Especially when considering the RME, the errors w.r.t. $\paramsreference$ roughly fit the increase in errors w.r.t. the ground truth for MIT Stata Center.

\section{Conclusion}

We presented a novel graph pruning algorithm for use in lifelong SLAM.
Our key idea is to compute the vertex density using a special density function and to prune vertices in areas where their density is too high.
Furthermore, we proposed an approach to marginalize vertices that is robust to wrong loop closures, which was experimentally validated on synthetic examples where it performs significantly better than the Chow--Liu tree method used by~\cite{Kretzschmar2011}.
Our pruning algorithm was evaluated on two public real-world datasets.
The evaluation shows a significant speedup (2.69$\times$ for the smaller and 40.68$\times$ for the larger dataset) at a moderate cost in terms of accuracy compared to standard graph-based SLAM without pruning (\SI{13}{\centi\meter} and \SI{6}{\centi\meter} map error, respectively).
Compared to the ground truth, the trajectory error increased from \SI{45}{\centi\meter} to \SI{47}{\centi\meter} (or \SI{4.4}{\percent}) when using pruning.

While the proposed algorithm is limited to 2D SLAM, generalization to 3D is straightforward.
Future work may include the consideration of the combined effect of pruning multiple vertices/edges as well as the development of more sophisticated marginalization algorithms that are still robust to incorrect loop closures.

\bibliographystyle{IEEEtran}
\bibliography{../bib/gkbosch}

\begin{thebibliography}{10}
\providecommand{\url}[1]{#1}
\csname url@rmstyle\endcsname
\providecommand{\newblock}{\relax}
\providecommand{\bibinfo}[2]{#2}
\providecommand\BIBentrySTDinterwordspacing{\spaceskip=0pt\relax}
\providecommand\BIBentryALTinterwordstretchfactor{4}
\providecommand\BIBentryALTinterwordspacing{\spaceskip=\fontdimen2\font plus
\BIBentryALTinterwordstretchfactor\fontdimen3\font minus
  \fontdimen4\font\relax}
\providecommand\BIBforeignlanguage[2]{{%
\expandafter\ifx\csname l@#1\endcsname\relax
\typeout{** WARNING: IEEEtran.bst: No hyphenation pattern has been}%
\typeout{** loaded for the language `#1'. Using the pattern for}%
\typeout{** the default language instead.}%
\else
\language=\csname l@#1\endcsname
\fi
#2}}

\bibitem{Grisetti2010a}
G.~{Grisetti}, R.~{Kümmerle}, C.~{Stachniss}, and W.~{Burgard}, ``A tutorial
  on graph-based {SLAM},'' \emph{IEEE Intelligent Transportation Systems
  Magazine}, vol.~2, no.~4, pp. 31--43, 2010.

\bibitem{Cadena2016}
C.~Cadena, L.~Carlone, H.~Carrillo, Y.~Latif, D.~Scaramuzza, J.~Neira, I.~Reid,
  and J.~J. Leonard, ``Past, present, and future of simultaneous localization
  and mapping: Toward the robust-perception age,'' \emph{{IEEE} Transactions on
  Robotics}, vol.~32, no.~6, pp. 1309--1332, dec 2016.

\bibitem{Krajnik2016}
T.~Krajnik, J.~P. Fentanes, M.~Hanheide, and T.~Duckett, ``Persistent
  localization and life-long mapping in changing environments using the
  frequency map enhancement,'' in \emph{2016 {IEEE}/{RSJ} International
  Conference on Intelligent Robots and Systems ({IROS})}, oct 2016.

\bibitem{Kuemmerle2011}
R.~Kümmerle, G.~Grisetti, H.~Strasdat, K.~Konolige, and W.~Burgard,
  ``g\textsuperscript{2}o: A general framework for graph optimization,'' in
  \emph{Proc. IEEE Int. Conf. Robotics and Automation}, May 2011, pp.
  3607--3613.

\bibitem{Dellaert2012}
F.~Dellaert, ``Factor graphs and gtsam: A hands-on introduction,'' Georgia
  Institute of Technology, Tech. Rep. GT-RIM-CP\&R-2012-002, Sept. 2012.

\bibitem{Kretzschmar2010}
H.~Kretzschmar, G.~Grisetti, and C.~Stachniss, ``Lifelong map learning for
  graph-based slam in static environments,'' \emph{KI-K{\"u}nstliche
  Intelligenz}, vol.~24, no.~3, pp. 199--206, 2010.

\bibitem{Chow1968}
C.~Chow and C.~Liu, ``Approximating discrete probability distributions with
  dependence trees,'' \emph{{IEEE} Transactions on Information Theory},
  vol.~14, no.~3, pp. 462--467, may 1968.

\bibitem{Kretzschmar2011}
H.~{Kretzschmar}, C.~{Stachniss}, and G.~{Grisetti}, ``Efficient
  information-theoretic graph pruning for graph-based {SLAM} with laser range
  finders,'' in \emph{IEEE/RSJ Int. Conf. Intelligent Robots and Systems},
  2011, pp. 865--871.

\bibitem{Eade2010}
E.~{Eade}, P.~{Fong}, and M.~E. {Munich}, ``Monocular graph {SLAM} with
  complexity reduction,'' in \emph{Proc. IEEE/RSJ Int. Conf. Intelligent Robots
  and Systems}, 2010, pp. 3017--3024.

\bibitem{Lazaro2018}
M.~T. {Lázaro}, R.~{Capobianco}, and G.~{Grisetti}, ``Efficient long-term
  mapping in dynamic environments,'' in \emph{Proc. IEEE/RSJ Int. Conf.
  Intelligent Robots and Systems (IROS)}, 2018, pp. 153--160.

\bibitem{Grisetti2012}
G.~Grisetti, R.~Kummerle, and K.~Ni, ``Robust optimization of factor graphs by
  using condensed measurements,'' in \emph{2012 {IEEE}/{RSJ} International
  Conference on Intelligent Robots and Systems}, oct 2012.

\bibitem{CarlevarisBianco2013}
N.~Carlevaris-Bianco and R.~M. Eustice, ``Generic factor-based node
  marginalization and edge sparsification for pose-graph {SLAM},'' in
  \emph{2013 {IEEE} International Conference on Robotics and Automation}, may
  2013.

\bibitem{CarlevarisBianco2014}
N.~Carlevaris-Bianco, M.~Kaess, and R.~M. Eustice, ``Generic node removal for
  factor-graph {SLAM},'' \emph{{IEEE} Transactions on Robotics}, vol.~30,
  no.~6, pp. 1371--1385, dec 2014.

\bibitem{Mazuran2015}
M.~Mazuran, W.~Burgard, and G.~D. Tipaldi, ``Nonlinear factor recovery for
  long-term {SLAM},'' \emph{The International Journal of Robotics Research},
  vol.~35, no. 1-3, pp. 50--72, jun 2015.

\bibitem{Ta2018}
D.-N. Ta, N.~Banerjee, S.~Eick, S.~Lenser, and M.~E. Munich, ``Fast nonlinear
  approximation of pose graph node marginalization,'' in \emph{2018 IEEE
  International Conference on Robotics and Automation (ICRA)}.\hskip 1em plus
  0.5em minus 0.4em\relax IEEE, 2018, pp. 2494--2501.

\bibitem{Wang2015}
Y.~Wang, R.~Xiong, and S.~Huang, ``A pose pruning driven solution to pose
  feature {GraphSLAM},'' \emph{Advanced Robotics}, vol.~29, no.~10, pp.
  683--698, jan 2015.

\bibitem{Underwood2013}
J.~P. {Underwood}, D.~{Gillsjö}, T.~{Bailey}, and V.~{Vlaskine}, ``Explicit
  {3D} change detection using ray-tracing in spherical coordinates,'' in
  \emph{Proc. IEEE Int. Conf. Robotics and Automation}, 2013, pp. 4735--4741.

\bibitem{Lee2013}
G.~H. Lee, F.~Fraundorfer, and M.~Pollefeys, ``Robust pose-graph loop-closures
  with expectation-maximization,'' in \emph{2013 {IEEE}/{RSJ} International
  Conference on Intelligent Robots and Systems}, nov 2013.

\bibitem{Hart1968}
P.~Hart, N.~Nilsson, and B.~Raphael, ``A formal basis for the heuristic
  determination of minimum cost paths,'' \emph{{IEEE} Transactions on Systems
  Science and Cybernetics}, vol.~4, no.~2, pp. 100--107, 1968.

\bibitem{Biber2003}
P.~Biber and W.~Strasser, ``The normal distributions transform: a new approach
  to laser scan matching,'' in \emph{2003 {IEEE}/{RSJ} International Conference
  on Intelligent Robots and Systems ({IROS} 2003)}, 2003.

\bibitem{Burgard2009comparison}
W.~Burgard, C.~Stachniss, G.~Grisetti, B.~Steder, R.~K{\"u}mmerle, C.~Dornhege,
  M.~Ruhnke, A.~Kleiner, and J.~D. Tard{\"o}s, ``A comparison of slam
  algorithms based on a graph of relations,'' in \emph{2009 IEEE/RSJ
  International Conference on Intelligent Robots and Systems}, 2009, pp.
  2089--2095.

\end{thebibliography}
	
\end{document}